\documentclass[12pt]{article}
\usepackage{amsmath,amssymb,amsthm}
\usepackage{graphicx}
\usepackage{hyperref}
\usepackage{tikz}
\usepackage{authblk}

\usetikzlibrary{arrows.meta, positioning}

\usepackage[dvipsnames]{xcolor}
\definecolor{RubineRed}{HTML}{ED017D}
\definecolor{ao(english)}{rgb}{0.01, 0.75, 0.24}
\definecolor{ballblue}{rgb}{0.13, 0.67, 0.8}
\definecolor{amethyst}{rgb}{0.6, 0.4, 0.8}
\definecolor{cinnabar}{rgb}{0.89, 0.26, 0.2}

\usepackage{setspace}

\newtheorem{proposition}{Proposition}

\numberwithin{equation}{section}

\title{A Structural Theory of Cognitive Representation and Problem Solving \\
{\large Contexts, Invariance, and the Knowledge Space}}
\date{\today}

\author[1,2]{Antal Jakovac}
\author[1]{András Telcs}

\affil[1]{\footnotesize Department of Computational Sciences, HUN-REN Wigner Research Centre for Physics, H-1121 Budapest, Hungary}
\affil[2]{\footnotesize Department of Statistics, Institute of Data Analytics and Information Systems, Corvinus University of Budapest, 8 F\H{o}v\'am Square, H-1093 Budapest, Hungary}

\begin{document}
\maketitle
 
\begin{abstract}
Learning and problem solving depend critically on the structure of internal representations.  While many modern data-driven artificial systems achieve strong predictive performance, their learned representations often lack explicit structure for expressing abstraction, invariance, and task-relevant regularities.  We propose a minimal structural framework in which representational operations relevant to problem solving, such as context formation, invariance recognition, representative selection, abstraction, and procedural reuse, are made explicit. The central notion is that of a \emph{context}, formalized as a partition of a subset of an underlying state space, which fixes the distinctions, granularity, and form in which a problem can be posed. Within this setting, invariance recognition and representative selection are treated as fundamental representational operations. The framework is realized as a Knowledge Space composed of two coupled graph structures: a Concept Graph that hosts constructed and refined concepts, and a Procedure Graph that encodes typed operations over representations. Together, these structures provide a minimal cognitive-representational algebra for operating on representations without assuming sophisticated inference, learning, control, perception, or motor mechanisms. Using simple illustrative examples and a finite weak-solver demonstration, we show that appropriate representational organization can simplify the form and scope of admissible regularities, even when problem solving is carried out by a fixed and limited solver. The contribution of the paper is structural rather than algorithmic: it identifies representational prerequisites for abstraction, invariance, and procedural reuse in problem solving, and states explicit success and failure conditions for the weak-solver setting.
\end{abstract}

 \section{Introduction}

Learning and problem solving depend critically on how information is represented. While many modern data-driven artificial systems achieve strong predictive performance, their learned representations often leave implicit the structures needed for abstraction, invariance, compositionality, and task-relevant regularities \cite{BengioCourvilleVincent2013,LocatelloEtAl2019,GeirhosEtAl2020,LakeEtAl2017}. As a consequence, basic cognitive operations such as equivalence, invariance, abstraction, and relational reasoning are not explicitly available at the representational level.


The present paper addresses this question by introducing a minimal structural framework centered on the notion of \emph{context}. Contexts are formalized as finite partitions of subsets of the state space. They specify which distinctions are meaningful, at what granularity a task is posed, and in which representational space admissible solutions can be expressed. In this sense, contexts define the structural conditions under which problem solving is possible.

We treat invariance and representative selection as representational operations acting on contexts. They are not introduced as learning objectives, nor as statistical effects of data or loss functions. Instead, they are structural mechanisms that suppress irrelevant distinctions and expose simpler task-relevant laws.  We add further cognitive primitives to the framework to obtain a basic cognitive algebra.

To make these ideas explicit, we introduce the \emph{Knowledge Space}  as a minimal representational scaffold composed of two coupled graph structures: a Concept Graph   and a Procedure Graph (2G). The Concept Graph serves as a host structure for concepts that are introduced, refined, and reused during problem solving, rather than as a fixed ontology. Procedures are assumed to be weak and fixed; representational structure is the primary source of problem-solving power.

structure supports simple forms of problem solving with weak solvers. The paper does not propose a full learning architecture, a control mechanism, or
a theory of causality. Instead, it gives a self-contained representational core: context formation, invariance recognition, representative selection,
procedure storage, and a finite weak-solver demonstration with explicit success and failure conditions. Thus the paper does not ask the reader to
accept the general claim that representation helps; it specifies conditions under which representational structure reduces the burden placed on a fixed
weak solver.

\section{Related Work and Positioning}

This work is related to several lines of research in cognitive science, theoretical computer science, and artificial intelligence.  We briefly position it relative to these directions, emphasizing points
of contact and distinction.

The role of representation in problem solving has long been emphasized in cognitive science.  
Classical work by Newell and Simon highlights how representational choice determines the tractability of problem solving, independent of search strategy \cite{newell1972}. Marr’s levels of analysis further distinguish representational questions from algorithmic and implementational concerns \cite{Marr1982}. The present work operates explicitly at the representational level. Its aim is to identify the structural organization of representations that is needed before particular learning algorithms or control architectures are specified. 

\subsection{Cognitive architectures, schemas, and procedural memory}  

Our proposal is not a cognitive architecture in this sense. It does not define a complete control cycle, a perception-action loop, motivational or metacognitive subsystems, timing assumptions, or quantitative behavioral predictions. Its narrower aim is to formalize a representational substrate on which tasks, contexts, concepts, and procedures can be stated. The Knowledge Space should therefore be read as a structural layer that may support architectural mechanisms, not as a replacement for ACT-R, Soar, CLARION, LIDA, or the Common Model. In this respect the closest point of contact is not the full architecture of these systems, but their distinction between transient task states, long-term conceptual or declarative structures, and reusable procedural structures.

The framework is also related to classical schema theory. Bartlett introduced schemas as organized structures that shape remembering and interpretation \cite{Bartlett1932}. Minsky's frames represented knowledge through structured descriptions with slots, defaults, and procedural attachments \cite{Minsky1975}. Schank and Abelson's scripts represented stereotyped event structures and action sequences \cite{SchankAbelson1977}. In the present framework, contexts play a related but more formal role: a context fixes the distinctions and granularity within which a task is posed. A schema is then not taken as a primitive mental object, but as a stable pattern of representational and procedural organization that can be reconstructed from contexts, features, relations, procedures, and stored solution traces.

This distinction is important for the term ``cognitive primitive.'' We do not intend graph traversal, set union, or Boolean operations, by themselves, to carry cognitive content. They are implementation-level supports. The cognitive content lies in their representational role: reading a segmented task code, forming distinctions, comparing feature coordinates, projecting away irrelevant variation, recognizing equality or inequality, retrieving a stored procedure, composing procedures, and consolidating repeated successful traces into stable concepts or procedures. Thus the primitives are cognitive only when they operate on the representational structures supplied by contexts and the Knowledge Space.

The Procedure Graph is introduced for the same reason. It is not meant to be a production system with a complete interpreter, conflict-resolution mechanism, or control cycle. Production systems represent action or inference rules in an if--then format and are used centrally in architectures such as ACT-R and Soar. By contrast, a node of the Procedure Graph represents a typed operation over representations: for example, a projection, a comparison, a conjunction of tests, a representative-selection step, or a stored solution procedure. During problem solving, successful solution traces create new procedure nodes. Direct reuse corresponds to applying a stored node. Bounded composition creates a new procedure node linked to the procedures from which it was composed. Repeatedly successful procedures may induce equivalence edges, while procedures that discard distinctions and still preserve success may induce generalization edges.

In this way the Procedure Graph is not decorative. It records which operations are available, how they are typed, when they are reusable, and how new procedures arise from successful traces. The constructive weak-solver demonstration uses this structure explicitly: the solver begins with primitive procedures, solves tasks when possible, stores successful procedures, reuses them on later tasks, and forms new procedure nodes by bounded composition or bounded generation. The Concept Graph and Procedure Graph together therefore play the role of a minimal long-term representational store, while the current task code and its active trace play the role of a transient working structure. This analogy to working and long-term memory is structural only; no claim is made that the present model is a psychologically complete memory architecture.

Formal notions of abstraction and simplification have also been studied     in logic and symbolic AI.     Research on rule systems, theory revision, and symbolic reasoning    typically assumes that laws are already available in symbolic form and  focuses on their manipulation \cite{gabbay2008}.     In contrast, our work addresses an earlier stage by defining basic cognitive-representational primitives and the structural conditions under which they form a cognitive algebra.

\paragraph{Category-theoretic and invariance-based approaches.}

The role of invariance in representation has also been studied extensively in representation learning and structured neural architectures. Work on equivariance and equivalence of image representations, group-equivariant convolutional networks, and geometric deep learning shows how known symmetries and structural regularities can be incorporated into or measured in learned representations \cite{LencVedaldi2015,CohenWelling2016,BronsteinEtAl2021}. Our use of invariance is different. We do not build an architecture that is equivariant to a prescribed transformation group. Instead, invariance is introduced as a task-relative equivalence relation over a context, and representative selection is treated as a representational operation that may simplify the form of the task for a weak solver.
    
Modern representation learning emphasizes invariance and disentanglement
as prerequisites for generalization \cite{bengio2013}.
Architectural approaches such as equivariant neural networks encode
known symmetries directly into models.
By contrast, we treat invariance as a cognitive and structural
primitive, expressed as equivalence relations over contexts rather than
as architectural constraints.

Work on invariant risk minimization and out-of-distribution
generalization studies predictors that remain stable across multiple
environments \cite{arjovsky2019}.
These approaches operate at the level of predictors and losses.
Our focus is instead on the representational substrate: we analyze how
contexts and invariances define the space in which laws can be stated,
prior to any notion of risk or optimization.

Causal modeling further illustrates the importance of representational
structure.
Structural approaches to causality distinguish between correlation and
causal relations by imposing constraints on admissible models
\cite{pearl2009}.
In the present paper, causality is not modeled explicitly.
We only introduce minimal structural primitives, such as successor and
conditional dependence, that are necessary prerequisites for later
causal reasoning.

Within the Cognitive Systems Research community, conceptual and
structural approaches to representation have emphasized geometry,
similarity, and conceptual spaces \cite{gardenfors2000}.
Our framework shares the emphasis on structure but replaces metric
similarity with context partitions, evolving concept graphs structure
and refinement, allowing abstraction to be defined independently of geometry.

\paragraph{Rough sets and context partitions.}

In summary, the present work differs from prior approaches by treating
context as the foundational unit of task formulation and representation.
Rather than proposing new learning algorithms or inference mechanisms,
we provide a minimal structural framework.
This positioning is intended to support later work on task specification, cognitive schema and  learning,
evaluation, causality, and control, while remaining self-contained at the representational level.

\paragraph{Scope and falsifiability.}

\section{Foundations}

This section introduces the basic definitions that serve as the foundational
layer for the development of the cognitive algebra.
We present a representational scheme on which elementary constructs such as
sets, axioms, and primitive manipulations can be defined.
The goal is not to impose a specific domain interpretation, but to establish
a minimal and self-consistent structural basis for subsequent developments.

\subsection{Universe, States and Time}

We adopt a deliberately physical viewpoint, centered on the states of the world and their evolution in time. As will become clear later, this choice is not restrictive: the resulting formalism is sufficiently general to support a wide range of cognitive and representational interpretations.

The ultimate arena in which all actions take place is the Universe. Time serves as a label for different spatial instances, parameterized by a real number. It is assumed to possess an additive structure, meaning that for any three spatial instances 
$a$, $b$ and $c$, the corresponding time differences satisfy $$t_{ab}+t_{bc}=t_{ac}.$$

At each spatial instance, the Universe occupies one of its possible states. The set of all such states is denoted by $\Omega$. A state is assumed to be complete, in the sense that it uniquely determines all future states of the Universe. Under this assumption, the time evolution of the Universe can be described as a deterministic, memoryless process.

We emphasize that the Universe, its states, and the precise form of its time evolution are theoretical constructs. Their role is to support the assumption that the evolution of the Universe is deterministic, given a single actual state, even though such determinism may not be directly observable or computationally accessible.

\subsection{Object classes}

We now shift to a description from the point of view of an agent -- an intelligent entity that may be a living being, in particular a human, or an artificial system. The agent observes the Universe, understood as the ultimate underlying reality. However, the vast majority of the Universe is irrelevant for the agent’s description of the world. This is partly because the agent cannot observe most of it, partly because it lacks the computational capacity to account for all microscopic details, and most importantly because many details play no role in the fulfillment of the agent’s purpose. In biological agents this purpose is typically the maintenance of the agent within its environment, whereas in artificial agents it may be defined by externally imposed goals.

This line of reasoning implies that the agent’s relevant concepts do not correspond to individual states of the Universe, but rather to collections of states. For example, the concept of a book refers not to a single microscopic configuration of the Universe, but to a large set of states in which the book exists, irrespective of irrelevant details such as the precise position of Mars.

Consequently, for an agent, the fundamental constituents of the world are not the elementary states themselves, but subsets of the state space $\Omega$. These subsets are typically very large (cf. cylinder sets in probability theory). We refer to such subsets as object classes, or simply classes.

\paragraph{(Object) class:} subsets of the Universe, collecting states that differ only in irrelevant details.

For example:
\begin{itemize}
    \item a \emph{pen} means the same object, irrespective to the exact shape, matter content, position in the space and time
    \item living being is a collection of states where the singled out object is a living being, while all other details are irrelevant
    \item in an image the concept that a given pixel is red means that all other pixels have arbitrary color
\end{itemize}

\subsection{Context}

The object classes are the simplest elements of the description of the world. Often we want to speak about finer distinctions relative to a given concept (e.g.\ dog breeds within the class of dogs). A \emph{context} formalizes such a distinction as a partition of the underlying class.

\paragraph{Context (partition).}
Let $C \subseteq \Omega$ be an object class. A \textbf{context} on $C$ is a finite partition $\mathcal{C}$ of $C$, i.e.\ a finite family of nonempty subsets of $C$ such that
\[
\bigcup_{c \in \mathcal{C}} c = C,
\qquad
c \cap c' = \emptyset \ \text{for}\ c \neq c',
\qquad
c \neq \emptyset \ \text{for all}\ c \in \mathcal{C}.
\]
The elements $c \in \mathcal{C}$ are called the \textbf{cells} (or \textbf{context cells}) of $\mathcal{C}$. The \textbf{underling class} of the context $\mathcal C$ is $\mathrm{base}(\mathcal{C}) := C$).

Note that an object class can be represented as a context with a single element set.

\subsubsection{Contexts instead of states}

States are a semantic abstraction. In practice, an agent accesses the world exclusively through \emph{contexts}, whose cells are object classes. There is no ultimate description of the world; contexts may be refined as needed. A context determines what can be distinguished and thus sets the framework of a task to be performed or a problem to be solved.

From this point on, we treat contexts (and their cells) as the primary objects,
using states only as semantic scaffolding when needed.

\subsubsection{Refinement and induced maps}

\paragraph{Refinement.}
Let $\mathcal{C}$ and $\mathcal{C}'$ be contexts on the same underlying class $C$.
We say that $\mathcal{C}'$ is a \textbf{refinement} of $\mathcal{C}$, written
\[
\mathcal{C}' \preceq \mathcal{C},
\]
if every cell of $\mathcal{C}'$ is contained in a (unique) cell of $\mathcal{C}$:
\[
\forall c' \in \mathcal{C}'\ \exists!\, c \in \mathcal{C}\ \text{such that}\ c' \subseteq c.
\]
Equivalently, $\mathcal{C}$ is a coarsening of $\mathcal{C}'$.

Example: dog breeds is a refinement of the dogs.

\paragraph{Projection between contexts.}
If $\mathcal{C}' \preceq \mathcal{C}$, define the  projection $\Pi$
\[
\Pi_{\mathcal{C}\leftarrow \mathcal{C}'} : \mathcal{C}' \to \mathcal{C},
\qquad
\Pi_{\mathcal{C}\leftarrow \mathcal{C}'}(c') := c \ \text{where}\ c' \subseteq c.
\]
This map expresses how a finer description (cells of $\mathcal{C}'$) collapses to
a coarser description (cells of $\mathcal{C}$).

\paragraph{Local refinement.}
A context $\mathcal{C}'$ is a \textbf{local refinement} of $\mathcal{C}$ if there exists
a cell $c \in \mathcal{C}$ such that $\mathrm{base}(\mathcal{C}') = c$ (i.e.\ $\mathcal{C}'$
is a partition of one cell of $\mathcal{C}$).

\paragraph{Global refinement update.}
If $\mathcal{C}'$ is a local refinement of $\mathcal{C}$ refining a cell $c \in \mathcal{C}$,
the updated (globally refined) context is
\[
\mathcal{C}^{+} := (\mathcal{C} \setminus \{c\}) \cup \mathcal{C}'.
\]
Then $\mathcal{C}^{+}$ is a context on $\mathrm{base}(\mathcal{C})$ and satisfies
$\mathcal{C}^{+} \preceq \mathcal{C}$.

\paragraph{Common refinement.}
Let $\mathcal{C}_1$ and $\mathcal{C}_2$ be contexts on the same underlying class $C$.
Their \textbf{common refinement} is the context
\[
\mathcal{C}_1 \wedge \mathcal{C}_2 :=
\{\, c_1 \cap c_2 \mid c_1 \in \mathcal{C}_1,\ c_2 \in \mathcal{C}_2,\ c_1 \cap c_2 \neq \emptyset \,\}.
\]
It satisfies $\mathcal{C}_1 \wedge \mathcal{C}_2 \preceq \mathcal{C}_1$ and $\mathcal{C}_1 \wedge \mathcal{C}_2 \preceq \mathcal{C}_2$.
Moreover, for any context $\mathcal{D}$ on $C$ such that $\mathcal{D} \preceq \mathcal{C}_1$ and $\mathcal{D} \preceq \mathcal{C}_2$, we have
$\mathcal{D} \preceq \mathcal{C}_1 \wedge \mathcal{C}_2$.

\subsection{Characterization of contexts}

We briefly describe three basic ways of representing a context.
They are not mutually exclusive.
They serve as reference points for later constructions.

\subsubsection{Direct representation}

The simplest representation treats each cell of the context as an atomic object.
This is formally complete, but infeasible in high dimensions.
For example, a black-and-white image with one million pixels induces
$2^{1{,}000{,}000}$ possible configurations, hence a direct indexing of all cells
is not practical.

Typical instances where this representation is applied, include:
\begin{itemize}
    \item image classification, where a context cell corresponds to a class label;
    \item one-hot encoding of tokens, where each token corresponds to a distinct cell.
\end{itemize}

\subsubsection{Global coordinates}

A fine context can often be represented through several coarser contexts whose
common refinement reproduces it.

\paragraph{Coordination.}
A family of contexts $\mathcal{C}_1,\dots,\mathcal{C}_n$ is a \emph{coordination}
of a context $\mathcal{C}$ if
\[
\mathcal{C}_1 \wedge \cdots \wedge \mathcal{C}_n \;=\; \mathcal{C}.
\]

This approach is common in mathematics and in coordinate-based representations.
For example:
\begin{itemize}
    \item A $d$-dimensional real vector space can be coordinated by families of
    parallel hyperplanes. Their common intersections identify points.
    \item Embedding spaces in language models may be seen as global-coordinate
    representations, where different directions correspond to different
    contextual distinctions.
\end{itemize}

\subsubsection{Local coordinates}

A context can also be built by successive local refinements.
This yields a refinement tree and, consequently, a prefix description of cells.

\paragraph{Refinement tree.}
A \emph{refinement tree} for a context $\mathcal{C}$ is a rooted tree whose nodes
are labeled by subsets of $\mathcal{C}$ such that:
\begin{itemize}
  \item the root is labeled by $\mathcal{C}$;
  \item the children of any node form a partition of the parent label;
  \item the leaves are singletons $\{c\}$ with $c \in \mathcal{C}$.
\end{itemize}
The path from the root to a leaf defines a prefix code for the corresponding cell.

The refinement tree is not unique, even for a fixed context.
If leaf probabilities are known, one may choose a tree that minimizes expected
code length (e.g.\ Huffman coding).

Examples include:
\begin{itemize}
    \item taxonomies (e.g.\ biological classification);
    \item attribute-based product categories (e.g.\ display size, fat content);
    \item the ``twenty questions'' game.
\end{itemize}

To represent local coordinates one typically needs not only a value but also the
tested property. This motivates triplet-style descriptions (name:property:value).

\subsection{Features}

A \emph{feature} is a function defined on the cells of a context.

\paragraph{Feature.}
Let $\mathcal{C}$ be a context on $C \subseteq \Omega$.
A \emph{feature} is a map that is constant on the cells of $\mathcal C$
\[
f:C\to B,\quad f(x)=f(y)\quad \forall x,y,c: \; x,y\in c\in\mathcal C
\]
where $B$ is a value set (e.g.\ numbers or symbols).

A feature can be considered as a function whose domain is the context $f:\mathcal C\to B$, if it does not cause misunderstanding.

In measure and probability theory, a feature is called a measurable function with respect to a measurable space $(C,\sigma(\mathcal C))$, where $\sigma(\mathcal C)$ is the canonical sigma-algebra $\sigma$-algebra generated by the context $\mathcal C$.

\paragraph{Induced context.}
A feature induces a context (partition) of $C$ by grouping states according to
the feature value:
\[
f \mapsto \{\, f^{-1}(y) \mid y \in \operatorname{Range}(f),\ f^{-1}(y)\neq \emptyset \,\}.
\]

\paragraph{Extension across refinements.}
If $\mathcal{C}' \preceq \mathcal{C}$, then a feature $f:\mathcal{C}\to B$ extends
to $\mathcal{C}'$ by constancy on sub-cells: if $c' \subseteq c$ then $f(c'):=f(c)$.

\paragraph{Relation to rough-set consistency.}
The earlier notion of a relevant feature can also be expressed in the language of rough set theory. If features are treated as attributes of an information system and the task output is treated as a target or decision partition, then relevance is closely related to the consistency of an attribute-induced partition with the target partition, and hence to the background of reduct theory. The present paper does not claim this consistency condition as a new formal notion. Its aim is to place such conditions inside a cognitive task framework in which representational units may be features, partitions, representatives, laws, or admissible transformations.

\section{The Knowledge Space as Two Coupled Graphs (2G)}

We model the Knowledge Space (KS) as two coupled but distinct graph-structured
layers, collectively denoted \textbf{2G}:
(i) a \emph{Concept Graph} encoding conceptual entities and their relations,
(ii) a \emph{Procedure Graph} encoding operations as typed functions linked to
their conceptual domains and ranges.
The purpose of this section is to specify a minimal structural substrate on
which basic cognitive primitives can be defined.

\subsection{Concept Graph (CG)}

\paragraph{Definition.}
A \emph{Concept Graph} is a directed, typed graph
\[
\mathrm{CG} = (V_C, E_C, \tau_C),
\]
where $V_C$ is a set of concept nodes,
$E_C \subseteq V_C \times \Sigma_C \times V_C$ is a set of labeled edges, and
$\tau_C : V_C \to \mathcal{T}_C$ assigns each node a type from a finite set
$\mathcal{T}_C$.

\paragraph{Concept nodes.}
Nodes $v \in V_C$ represent concepts, including:
\begin{itemize}
  \item objects and attributes,
  \item sets and grouping constructs,
  \item relations (binary and higher-arity predicates),
  \item derived concepts (including invariances and representative choices).
\end{itemize}

\paragraph{Contexts and concept nodes.}
Contexts are not treated as graph objects.
They provide the semantic scaffolding for introducing and refining concepts.
Concept nodes are named distinctions that may arise at different levels of
granularity.
In particular, a concept node may represent:
(i) an underlying class $C \subseteq \Omega$,
(ii) a cell of a context defined on $C$, or
(iii) a derived construct obtained from contexts by operations such as invariance
recognition and representative selection.
In this way, contexts explain how distinctions are formed, while the Concept
Graph records which distinctions are available for reuse.

\paragraph{Edge labels.}
Edge labels encode basic structural relations between concepts.
In the permanent Concept Graph, we restrict labels to a minimal vocabulary:
\emph{equivalence}, represented by unoriented edges, and \emph{generalization}, represented by oriented edges from specific to more general concepts.
During problem solving, richer semantic relations and alternative labelings may
be introduced on task-specific copies of Concept Graph fragments.
When a problem is successfully solved, the resulting task–solution structure is
stored in a separate layer of permanent memory, together with explicit mappings
to and from the permanent Concept Graph.
These mappings link solution-specific constructs to stable concepts, and may
give rise to the introduction of new permanent concept nodes.
The mechanisms governing consolidation across memory layers are beyond the scope
of the present paper.

\subsection{Procedure Graph (PG)}

\paragraph{Definition.}
A \emph{Procedure Graph} is a directed, typed graph
\[
\mathrm{PG} = (V_F, E_F, \mathrm{dom}, \mathrm{rng}, \tau_F),
\]
where $V_F$ is a set of procedure nodes,
$E_F \subseteq V_F \times \Sigma_F \times V_F$ is a set of labeled edges,
$\mathrm{dom}$ and $\mathrm{rng}$ (domain and range) map each procedure to its input and output
types, and $\tau_F$ assigns each procedure node a procedure type.

\paragraph{Procedures as typed functions.}
Each node $p \in V_F$ represents a (possibly partial) function
\[
p : \mathrm{dom}(p) \to \mathrm{rng}(p),
\]
where input and output types refer to concept-structured objects
(e.g.\ sets, relations, graphs, coordinate tuples).

\paragraph{Edge labels in the Procedure Graph.}
We distinguish two fundamental types of edges.

\emph{Equivalence.}
An unoriented edge between two procedures represents equivalence.
Two procedures
$p : D_p \to R_p$ and $q : D_q \to R_q$
are equivalent if there exist bijections
$a : D_p \to D_q$ and $b : R_p \to R_q$
such that the diagram commutes  \ref{fig:procedure-equivalence}:
\[
q \circ a = b \circ p .
\]
Equivalently, the two procedures differ only by a relabeling of inputs and
outputs and induce isomorphic partitions of their domains.
Equivalence edges identify procedures that implement the same structural
transformation up to representation.

\begin{figure}[t]
\centering
\begin{tikzpicture}[
    node distance=3cm,
    auto,
    thick
]
\node (Dp) {$D_p$};
\node (Rp) [right of=Dp] {$R_p$};
\node (Dq) [below of=Dp] {$D_q$};
\node (Rq) [below of=Rp] {$R_q$};

\draw[->] (Dp) -- node[above] {$p$} (Rp);
\draw[->] (Dq) -- node[below] {$q$} (Rq);
\draw[->] (Dp) -- node[left] {$a$} (Dq);
\draw[->] (Rp) -- node[right] {$b$} (Rq);
\end{tikzpicture}
\caption{Equivalence of procedures.
Procedures $p : D_p \to R_p$ and $q : D_q \to R_q$ are equivalent if there exist
bijections $a$ and $b$ such that $q \circ a = b \circ p$.
}
\label{fig:procedure-equivalence}
\end{figure}

\emph{Generalization.}
An oriented edge $p \to q$ represents procedural generalization.
It indicates that $q$ is a more general procedure than $p$, in the sense that
$q$ collapses distinctions preserved by $p$ and induces a coarser partition of
the domain.
Thus, procedural generalization mirrors abstraction in the Concept Graph and
reflects a loss of representational detail rather than execution order or
control flow.

These two edge types constitute the complete labeling vocabulary of the
Procedure Graph in the present paper.
Procedural composition, when well-typed, is defined independently of these
edges and follows from the underlying algebra.

\paragraph{Examples.}
The Procedure Graph contains nodes representing primitive procedures operating
on structured representations. Typical examples include:
\begin{itemize}
  \item \emph{Representative selection} operators, which choose a canonical
        representative within an equivalence class induced by invariance;
  \item \emph{Abstraction} operators, such as projection or quotienting, which
        eliminate nuisance distinctions and induce coarser partitions;
  \item \emph{Derived-relation} constructors, which form new relations from
        existing ones within a fixed context;
  \item \emph{Simple inference or transformation} operators used by a solver to
        manipulate structured representations.
\end{itemize}

\paragraph{Remarks.}
In the present paper, the Procedure Graph is not used to model planning,
control, or execution order.
Its role is to organize primitive procedures by equivalence and generalization,
providing a structural account of how invariance and representational
simplification are expressed at the procedural level.

The purpose of 2G in this paper is to define a minimal structural substrate for
representations, not to propose a complete cognitive architecture.
Explicit goal representations, executive control, and long-horizon planning are
not introduced.
Instead, we use 2G to localize invariance and representative selection as
representation-level operations on structured knowledge, and we evaluate their
effect using simple solvers.

 During problem solving, task-specific copies of graph fragments are instantiated in transient memory; successful solutions are transferred to permanent memory together with mappings to the underlying conceptual structures.  The internal organization of memory is not modeled in detail here. 
\section{A Primitive Cognitive Algebra}

We introduce a minimal formal substrate together with genuinely cognitive
principles.
It is useful to separate (i) background formal assumptions required for
well-formed reasoning from (ii) cognitive principles governing structural
simplification.

\subsection{Formal assumptions}

\paragraph{Assumption A1 (Typing and compatibility).}
We assume a set of abstract type labels $\mathcal{T}$.
Types are assigned only to \emph{basic entities}, namely concept nodes and
procedure interfaces (domains and ranges).
Typing serves solely to express compatibility of entities under operations;
types do not denote sets, collections, or semantic domains.

\paragraph{Assumption A2 (Structural set formation).}
For each type label $\tau \in \mathcal{T}$, there exists a collection type
$\mathsf{Set}_\tau$.
An expression $x : \tau$ means that $x$ is compatible with type $\tau$.
Elements of $\mathsf{Set}_\tau$ are finite collections of expressions $x$ such
that $x : \tau$ holds for each element.

The empty collection $\emptyset_\tau : \mathsf{Set}_\tau$ and singleton
collections $\{x\} : \mathsf{Set}_\tau$ are admissible whenever $x : \tau$.
No semantic interpretation of $\tau$ as a universe of elements is assumed;
types function solely as structural labels governing admissibility.

\paragraph{Assumption A3 (Extensionality).}
For $A,B \in \mathsf{Set}_\tau$,
\[
A = B \iff \forall x : \tau,\; (x \in A \Leftrightarrow x \in B).
\]

\paragraph{Assumption A4 (Functions and composition).}
If $f:\tau\to\sigma$ and $g:\sigma\to\rho$, then $g\circ f:\tau\to\rho$ is
well-defined.
For each $\tau$ there exists $id_\tau:\tau\to\tau$.

\paragraph{Assumption A5 (Relations).}
Relations are predicates returning truth values on well-typed arguments.

\paragraph{Assumption A6 (Equality and compatibility).}
For each type $\tau$, equality $=_\tau$ is an equivalence relation on entities of
type $\tau$.
A procedure $f : \tau \to \sigma$ is said to be \emph{type compatible}
if
\[
x =_\tau y \;\Rightarrow\; f(x) =_\sigma f(y)
\]
whenever both sides are defined.
No assumption is made that all procedures are equality-compatible. This assumption is particularly important for representative selection (see in the next section).

\subsection{Core Cognitive Schemas and Algorithmic Support}

The cognitive algebra introduced in this paper provides a small set of
representational primitives and foundational schemas.
From these, a wide range of more complex cognitive schemas can be constructed.
In this section, we briefly revisit a few core schemas introduced in earlier
work and clarify how they are supported within the present framework.
The list is not intended to be exhaustive; rather, it identifies essential
schemas that recur across problem-solving tasks.

In earlier work, we introduced the schemas of \emph{generalization},
\emph{abstraction}, and \emph{extension} in a coordination-based setting.
In the present paper, we do not redevelop these schemas in detail.
Instead, we indicate how they arise naturally from the primitives and
foundational schemas introduced above.

Abstraction is realized by combining invariance-based equivalence with
representative selection. By selecting a representative for each equivalence class, irrelevant
distinctions are suppressed while task-relevant structure is preserved.
Abstraction therefore operates at the level of representation rather than at the
level of inference or decision-making.

Generalization concerns the construction of a joint description that covers
multiple classes (cf.\ \cite{jakovac2025}). Suppose we are given a collection of subsets
$\Omega_j \subseteq \Omega$, $j = 1,\dots,J$, each corresponding to a class of world states.
Within the present framework, each subset $\Omega_j$ induces a family of distinctions, represented by those coordinates or attributes that take fixed values across all states in $\Omega_j$.

The joint generalization is obtained by identifying the distinctions that are shared by all observed subsets.
Formally, let $S_j$ denote the set of indices of coordinates whose values are constant over $\Omega_j$, and define
\[
S^\ast = \bigcap_{j=1}^J S_j .
\]
The generalized class $\Omega^\ast$ is then defined as the set of states that satisfy all shared distinctions specified by $S^\ast$. In this construction, $S^*$ represents the joint abstraction of the observed classes, while $\Omega^*$ is the generalized class determined by the shared distinctions.

Extension corresponds to enriching the active conceptual structure when existing
distinctions are insufficient.
Within the 2G framework, this is realized by introducing new concept nodes or
relations in the Concept Graph, together with contexts that support the new
distinctions.
Extension increases representational capacity and is therefore treated
conservatively.

A particularly important higher-level operation is \emph{change of
representation}.  This refers to restructuring the representational space itself, for example by
introducing new coordinates, redefining contexts, or shifting between alternative invariant representations.

\paragraph{Algorithmic primitives.}
small collection of standard algorithmic primitives.
These primitives are not learned and are not domain-specific; they serve only to
operate on the representational structures provided by the Knowledge Space.
The assumed classes of primitives include:
\begin{itemize}
  \item \emph{Graph primitives:} traversal, reachability queries, and bounded
        search on the Concept and Procedure Graphs;
  \item \emph{Representative selection primitives:} procedures that select a
        concrete representative for equivalence classes induced by recognized
        invariances;
  \item \emph{Abstraction primitives:} projection and elimination operations that
        suppress distinctions or concept nodes that are irrelevant within a
        given context;
  \item \emph{Clustering primitives:} fixed, parametrized grouping procedures
        used to expose repeated or similar structure, without adaptive learning;
  \item \emph{Set manipulation primitives:} union, intersection, and set
        difference, applied to sets of concepts, cells, or relations;
  \item \emph{Set decision primitives:} tests such as emptiness, membership, and
        subset inclusion;
  \item \emph{Boolean primitives:} conjunction, disjunction, and negation,
        applied to the outputs of predicates or decision primitives.
\end{itemize}

The collection of primitives listed above is intentionally minimal. It suffices for the structural constructions used in the present paper and for the finite weak-solver demonstration in Appendix~\ref{app:weak-solver}. The primitives should not be read as general programming operations alone. Their cognitive role is fixed by the structures on which they act: contexts, feature coordinates, concept nodes, procedure nodes, and solution traces. In this restricted setting, they support three operations that are used explicitly below: direct application of available procedures, bounded composition of stored procedures, and bounded generation followed by testing.

\section{Problem Solving with Simple Solvers on 2G}

We describe how basic problem-solving behavior can be expressed within the 2G
framework using only simple solvers and the primitives above.
In this paper, a \emph{problem} is the task of identifying a valid and compact
rule or transformation consistent with given constraints.
We do not address long-horizon planning or planning-and-control.

The informal examples in this section are complemented by the finite
demonstration in Appendix~\ref{app:weak-solver}. There the task format,
initial Knowledge Space, solver stages, update rules, worked trace, and
failure conditions are specified explicitly. The examples below therefore
serve to motivate the representational operations, while the appendix gives
the corresponding testable weak-solver mechanism.

Across the examples, problem solving follows the same abstract pattern:
\begin{enumerate}
  \item Input is mapped into a structured representation, creating a transient
        subgraph of the Concept Graph within the Knowledge Space.
  \item Cognitive primitives and schema operate on this structure, producing
        alternative representations through invariance recognition, representative
        selection, or contextual projection.
  \item A fixed and deliberately weak solver evaluates admissible representations
        under explicit constraints and instantiates a valid solution when one
        exists.
\end{enumerate}
The solver itself does not change across tasks.
Any reduction in complexity or increase in generality is attributed to the
structure imposed by the representation and the context in which the task is
formulated.

\subsection{Example 1: Permutation invariance}

\paragraph{Task.}
Let $x=(x_1,\dots,x_n)$ be a finite sequence of symbols drawn from a finite
alphabet.
The task is to decide whether the sequence contains at least two identical
elements:
\[
f(x)=\mathbf{1}\{\exists i\neq j:\ x_i=x_j\}.
\]

Observation - The order of elements is irrelevant for this task.
A raw representation treats different permutations of $x$ as distinct inputs,
which leads to unnecessarily complex descriptions.

Invariance recognition. - Permutation invariance induces an equivalence relation on sequences:
$x\sim y$ if $y$ is obtained from $x$ by a permutation.

Representative selection. - Once this invariance is recognized, a representative may be selected for each
equivalence class.

In this example, sorting the sequence provides a simple choice of proper
representation, but alternative selections (e.g.\ multiset encodings) would
serve the same role.

This example serves only to illustrate how representational refinement, via
invariance recognition and representative selection, changes which solutions are
expressible in a given context, without altering the solver itself.

\subsection{Example 2: Invariance under nuisance coordinates}

Let $x=(x_1,\dots,x_d)\in\{0,1\}^d$.
There exists an unknown subset $S\subset\{1,\dots,d\}$ such that
\[
f(x)=\bigoplus_{i\in S} x_i,
\]
where $\oplus$ denotes XOR.
The task is to recover or evaluate the governing law from data.

In this task, only the coordinates indexed by a subset $S$ influence the output
$f(x)$. All remaining coordinates are irrelevant to the task and introduce variation
that does not participate in any dependency between inputs and outputs.

This situation gives rise to an invariance: changing coordinates outside $S$
does not affect the value of $f(x)$. Accordingly, an equivalence relation is induced on the task context, where two inputs are equivalent if they agree on all coordinates in $S$.

Representative selection fixes a concrete encoding for each equivalence class.
A natural choice is to project inputs onto the coordinates in $S$, discarding
all others.  This operation constitutes a choice of proper representation; its cognitive
content lies in the restriction of admissible distinctions, not in the specific
projection mechanism.

Within the reduced representation, admissible solutions can be expressed as
relations over $|S|$ variables. The same solver applied to the unreduced context must account for irrelevant
variation, whereas in the refined context this variation is structurally
excluded.

As in the previous example, the task specification itself does not identify
relevant coordinates. The reduction in representational complexity follows entirely from the
application of cognitive primitives at the level of contexts, independently of
how the relevant subset $S$ is discovered. This example can also be encoded as a Boolean decision table. In that encoding, the task is closely related to the classical rough-set reduct problem: one seeks a subset of attributes that preserves the distinctions needed for the decision output. The present example is not intended to introduce a new reduct algorithm. Its role is to show the same structural issue in task-context language: nuisance coordinates create distinctions that are irrelevant for the task, and a suitable task-relative representation suppresses them.

\section{Discussion and Conclusion}

This work proposes a minimal structural framework for organizing representations
so that basic cognitive primitives become explicit and usable.
The contribution is structural rather than algorithmic.

Our first claim is that the cognitive algebra and the associated structural
principles introduced here form a sufficient basis for a broad class of
representation-level operations relevant to problem solving.
Invariance recognition and representative selection provide a minimal mechanism
for suppressing irrelevant distinctions and exposing simpler task-relevant
regularities.
Within 2G, these operations are localized as procedures acting on structured
conceptual content.


Our third claim concerns the growth of the Knowledge Space under a weak
solver. The paper does not assume a strong learning mechanism. Instead,
Appendix~\ref{app:weak-solver} gives a finite demonstration in which the
solver starts from an explicit $K_0$, applies primitive procedures, composes
stored procedures, generates bounded candidate extensions, and stores only
successful procedures and traces. This gives a minimal in-paper account of
procedural reuse and representational extension, while keeping the solver
weak and the success conditions explicit.

The examples are intentionally limited.
They are meant to illustrate structural effects.
The solver is fixed and weak throughout, emphasizing that gains arise from the
representation structure rather than from algorithmic sophistication.

\textbf{Contributions and scope.}

The main contribution of this paper is the formalization of \emph{context} as the
foundational unit of distinction underlying representation, abstraction, and
problem formulation.
Contexts are defined as partitions of subsets of the state space and provide a
uniform mechanism for expressing relevance, refinement, and task-specific
structure.

Building on this foundation, we introduce a minimal structural framework—the
Knowledge Space with two coupled graphs (2G)—that situates concepts and 
 procedures within a single representational scaffold.
In this framework, the Concept Graph serves as a \emph{host structure} for
concepts that are constructed, refined, and reused during learning and problem
solving, rather than as a fixed ontology.

abstraction, and generalization are not proposed as novel algorithms. Instead, they are placed within a unified structural setting that makes their representational roles and interactions explicit. In the present paper these operations are used in two ways. First, they organize the examples of representational simplification. Second, they provide the primitives and schema used in the finite weak-solver demonstration. The focus is therefore not on a complete architecture for learning or control, but on a testable representational mechanism by which simple problem solving becomes possible with weak solvers.

The issue of context–task mismatch becomes central for complex problem solving.
As problem complexity increases, failures often arise not from the absence of a
solution, but from posing the task in a context whose distinctions are
incompatible with the available data or with the question being asked.
In the present framework, such situations correspond to ill-posed tasks: no
admissible solution can be expressed within the current context or at its chosen
granularity.
Recognizing ill-posedness and revising contexts—by changing granularity,
reformulating the task, or introducing new distinctions—are therefore essential
higher-level operations.

For classical perspectives on problem simplification, see Polya.\cite{Polya}
In the present framework, simplification corresponds to representational
operations such as fixing coordinates, restricting attention to a context cell,
or moving to a coarser partition.

 
\section{Additional Contextual Examples}
Additional illustrative examples highlighting context choice and representational form are provided in the appendix.

\subsection{Example 3: Context choice and representational form}

Consider finite sequences of symbols drawn from a fixed alphabet. We compare two contexts defined on the same underlying class.

In the context $\mathcal C_{\text{pos}}$, sequences are distinguished by symbol identity at each position. Two sequences belong to different cells whenever they differ at any position.  In the context $\mathcal C_{\text{mult}}$, sequences are distinguished only by
symbol multiplicities. Sequences that differ by permutation belong to the same cell. Thus, $\mathcal C_{\text{mult}}$ is a coarsening of $\mathcal C_{\text{pos}}$.

The two contexts admit different forms of representation. In $\mathcal C_{\text{pos}}$, structure is expressed in terms of positions and ordering. In $\mathcal C_{\text{mult}}$, structure is expressed in terms of counts or set membership.  

This example shows that context choice fixes which distinctions are admissible,
independently of the underlying objects.

\subsection{Example 4: Local versus uniform context construction}

Consider a class of objects described by multiple attributes.  One context assigns values in a fixed coordinate system shared across all objects. Each coordinate induces a partition of the underlying class, and the common
refinement of these coordinate partitions yields the full context.

An alternative context is constructed by successive local refinements.  Starting from a coarse partition, a cell is refined only when a finer distinction is required for the current purpose. The resulting context is naturally represented by a refinement tree: internal nodes correspond to intermediate cells, and leaves correspond to the final cells of the induced partition.

Both constructions define valid contexts over the same underlying class, but  they differ in how distinctions are introduced and organized. A uniform coordinate system commits to a fixed representational interface  from the start. Local refinement introduces distinctions selectively and keeps the refinement history explicit.

This example isolates a structural choice: representing a context as a single  uniform coordinate product versus representing it as a refinement process. Even when the resulting partitions coincide, the induced organization of  distinctions (and the associated procedures that traverse or update them) need
not be the same.

\subsection{Finite weak solver and explicit demonstration.}

\section*{Declaration of Interest}

The authors declare that they have no known competing financial interests or personal relationships that could have appeared to influence the work reported in this paper.

\newpage
\appendix
\section*{Appendix}

\section{Constructive demonstration with a finite weak solver}
\label{app:weak-solver}


\subsection{Task code and feature coordinates}

The input to the solver is a segmented task code. The solver does not receive an image and does not learn segmentation. It receives objects already represented as finite tuples of feature values. (c.f. \cite{zhang2019}.)  

An object code is written as
\[
x=(x_1,\ldots,x_d),
\]
where each index $i\in\{1,\ldots,d\}$ denotes a feature coordinate and $x_i$ is the value observed at that coordinate. The solver may read and compare coordinate values, but it does not know their external semantic interpretation. For example, a coordinate may externally correspond to shape, colour, size, or position, but these names are not available to the solver. 

For a set of coordinates $I\subseteq\{1,\ldots,d\}$, let
\[
x|_I
\]
denote the projection of $x$ to the coordinates in $I$.

For example,
\begin{verbatim}
one-feature object:    (a)
two-feature object:    (a,b)
three-feature object:  (a,b,c)
\end{verbatim}

A task is written as
\[
T=(P,N,C),
\]
where $P$ is a finite set of positive examples, $N$ is a finite set of negative examples, and $C$ is a finite set of candidates. The task source may choose which feature coordinates are relevant, but this information is not given to the solver. Thus the main constraint is
\[
\text{segmentation is given, but relevance of coordinates is not given.}
\]

\subsection{Initial Knowledge Space}

The initial Knowledge Space $K_0$ contains only primitive operations. In the present demonstration these are:
\begin{itemize}
\item read a segmented task code;
\item read a feature coordinate;
\item compare two coordinate values;
\item test equality;
\item test inequality;
\item project an object to selected coordinates;
\item combine tests by conjunction;
\item enumerate bounded coordinate subsets;
\item apply a test to examples and candidates;
\item store a successful procedure in the Procedure Graph;
\item insert a corresponding class distinction in the Concept Graph when needed;
\item retrieve a stored procedure;
\item apply a stored procedure;
\item store a solution trace.
\end{itemize}

The solver has no hidden rule labels and no oracle access. It does not know names such as shape, fill, circle, class, operation, or rule. These are only external interpretations. All procedures created during the demonstration are anonymous executable tests over feature coordinates.

\subsection{Separating coordinate tests}

Let $I$ be a subset of feature coordinates. A coordinate subset $I$ is acceptable for a task $T=(P,N,C)$ if all positive examples agree on $I$, and every negative example differs from this positive pattern on at least one coordinate in $I$. That is,
\[
x|_I=x'|_I \quad \text{for all} x,x'\in P,
\]
and
\[
y|_I\neq x|_I \quad \text{for all} y\in N \text{ and} x\in P.
\]
An acceptable coordinate subset defines a classifier: an object is accepted if its projection to $I$ agrees with the positive pattern.

The solver searches only a bounded family of coordinate subsets. For the demonstration with at most three coordinates this family is
\[
\{1\},\{2\},\{3\},\{1,2\},\{1,3\},\{2,3\},\{1,2,3\}.
\]
If several acceptable subsets exist, the solver chooses the first one according to a fixed order, for example increasing cardinality and then lexicographic order. If this choice leads to ambiguity among candidates and no feedback resolves it, the task is treated as underdetermined.

\subsection{Weak-solver algorithm}

\begin{verbatim}
Input:
    segmented task T = (P,N,C)
    current Knowledge Space K_t

Stage 1:
    Try one-step solutions:
        primitives
        stored procedures

    If one succeeds:
        store the solution trace
        stop.

Stage 2:
    Try bounded compositions of known items:
        p_i AND p_j
        p_i after p_j
        p_i after p_j after p_k

    If one succeeds:
        store the composed procedure
        store the solution trace
        stop.

Stage 3:
    Generate a bounded list of admissible new items:
        new projection
        new abstraction
        new class distinction
        new procedure candidate

    Test each generated item in combination with existing items.

    If one succeeds:
        insert the generated item into the Knowledge Space
        store the new procedure
        store the solution trace
        stop.

Failure:
    If all three stages fail, the task is unsolved in K_t
    under the present bounds.
\end{verbatim}

This order separates three mechanisms. In Stage 1 the task is already expressible in $K_t$. In Stage 2 the task is solvable by recombining known items. In Stage 3 the task requires a bounded extension of $K_t$.

\begin{center}
\begin{tabular}{c p{4.0cm} p{6.8cm}}
\hline
Level & Mechanism & Meaning \\
\hline
1 & direct application & the task is already expressible in $K_t$ \\
2 & bounded composition & the task is solvable by recombining known items \\
3 & bounded generation & the task requires extension of $K_t$ \\
\hline
\end{tabular}
\end{center}

The solver remains weak throughout. It does not search over all possible rules. It follows the fixed order
\[
\text{apply}
\quad \longrightarrow \quad
\text{compose}
\quad \longrightarrow \quad
\text{generate bounded candidates and test}.
\]

\subsection{Knowledge Space update}

If a stored or primitive procedure $p$ solves $T$, then only the trace is added:
\[
K_{t+1}=K_t\cup\{\operatorname{trace}(T,p)\}.
\]
If a bounded composition succeeds, the composed procedure is stored:
\[
K_{t+1}=K_t\cup\{p_{\mathrm{new}},\operatorname{trace}(T,p_{\mathrm{new}})\}.
\]
Here $p_{\mathrm{new}}$ is not a new primitive. It is a stored composition of already available procedures.

If Stage 3 succeeds by using a generated item $g_\ell$, the update has the form
\[
K_{t+1}=K_t\cup\{g_\ell,p_{\mathrm{new}},\operatorname{trace}(T,p_{\mathrm{new}})\}.
\]
Only successful generated items are inserted into the Knowledge Space. Repetition may then stabilize a solution pattern into a concept node or a reusable procedure node:
\[
\text{repeated successful traces}
\longrightarrow
\text{stable concept or procedure}.
\]

\subsection{Curriculum}

The curriculum begins with feature equality and inequality and then moves to classification. The early tasks require only direct application of primitives. Later tasks require composition and finally bounded generation.

\begin{center}
\begin{tabular}{c p{4.0cm} p{7.2cm}}
\hline
Step & Input form & What the weak solver learns \\
\hline
1 & one coordinate & equality and inequality of one feature value \\
2 & one-coordinate classification & a procedure separating positive and negative examples \\
3 & two coordinates, one relevant & projection: one coordinate matters, one coordinate is ignored \\
4 & two coordinates, both relevant & conjunction of two coordinate tests \\
5 & three coordinates, one relevant & abstraction over two irrelevant coordinates \\
6 & three coordinates, two relevant & two-coordinate classification procedure \\
7 & three coordinates, all relevant & full conjunction procedure \\
8 & reuse task & application of an already stored procedure \\
9 & combination task & bounded combination of stored procedures \\
10 & extension task & bounded generation and test of a new item \\
\hline
\end{tabular}
\end{center}

\subsection{Worked trace}

A typical trace has the following form.

\begin{verbatim}
K_0:
    read_code
    read_coordinate
    equal
    not_equal
    project
    AND
    store
    retrieve
    apply

Task T_1:
    one-coordinate classification.
    WS creates proc_001.

K_1 = K_0 + proc_001 + trace(T_1,proc_001)

Task T_2:
    two-coordinate objects, one relevant coordinate.
    WS reuses equality and creates proc_002,
    which ignores the irrelevant coordinate.

K_2 = K_1 + proc_002 + trace(T_2,proc_002)

Task T_3:
    two-coordinate objects, both coordinates relevant.
    WS creates proc_003 by conjunction.

K_3 = K_2 + proc_003 + trace(T_3,proc_003)

Task T_4:
    three-coordinate objects, two relevant coordinates.
    WS creates proc_004 by projection and conjunction.

K_4 = K_3 + proc_004 + trace(T_4,proc_004)

Task T_5:
    reuse task.
    WS applies proc_004 directly.

K_5 = K_4 + trace(T_5,proc_004)

Task T_6:
    combination task.
    WS tries bounded combinations of stored procedures
    and stores the successful combination as proc_005.

K_6 = K_5 + proc_005 + trace(T_6,proc_005)

Task T_7:
    no stored procedure or bounded composition succeeds.
    WS generates a bounded list of admissible candidates.
    One generated abstraction g_1 succeeds in combination
    with stored procedures.

K_7 = K_6 + g_1 + proc_006 + trace(T_7,proc_006)
\end{verbatim}

This trace shows the intended mechanism. Nothing hidden is given to the weak solver. No oracle provides the relevant feature coordinate. Segmentation is given, but relevance is not given. The solver uses only primitives in $K_0$, stored procedures, bounded compositions, and bounded generation. The Knowledge Space grows from $K_0$ to $K_t$ through successful solution traces.

\subsection{Success and failure conditions}

\begin{proposition}[Direct separation]
Let $T=(P,N,C)$ be a task whose objects have at most three feature coordinates. Suppose there exists an acceptable coordinate subset $I$ in the solver's bounded search family, and suppose that the corresponding classifier selects a unique correct candidate from $C$. Then the weak solver finds an acceptable subset, solves the task, stores the corresponding anonymous procedure, and stores the solution trace.
\end{proposition}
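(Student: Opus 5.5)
The proof traces the solver's fixed order of operations and shows that the hypotheses force success no later than the stage that enumerates coordinate subsets. Everything is finite: at most three coordinates give the seven subsets $\{1\},\{2\},\{3\},\{1,2\},\{1,3\},\{2,3\},\{1,2,3\}$, and $P$, $N$, $C$ are finite. Termination and decidability are therefore immediate.

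\emph{Step 1: acceptability is decidable for each $I$.} For a fixed $I$ in the search family, acceptability is checked with the primitives of $K_0$ alone. The solver reads the code, projects to $I$ and tests equality of $x|_I$ and $x'|_I$ for all pairs $x,x'\in P$. It then tests $y|_I\neq x|_I$ for all $y\in N$, combining the finitely many outcomes by conjunction.

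\emph{Step 2: an acceptable subset is found.} The solver enumerates the bounded family in the fixed order, first by increasing cardinality and then lexicographically. By hypothesis the set $A$ of acceptable subsets in this family is nonempty. The enumeration therefore reaches a first element $I^\ast=\min A$. I would not require $I^\ast$ to equal the $I$ in the hypothesis; I only need that $I^\ast$ exists and that the solver evaluates it.

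\emph{Step 3: the chosen classifier picks a unique correct candidate.} This is the main obstacle. The hypothesis gives a unique correct candidate only for the classifier induced by the given $I$, while the solver commits to $I^\ast$.

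My first attempt is to read the hypothesis as quantifying over the classifier the solver actually constructs. This matches the tie-breaking clause in the preceding subsection, which declares a task underdetermined precisely when the chosen subset produces ambiguity. In that reading, "the corresponding classifier" means the classifier of the first acceptable subset, and I would state this convention explicitly. If a stronger form is wanted, I would add the hypothesis that all acceptable subsets in the family induce the same decision on $C$. Under that assumption $I^\ast$ selects the same unique candidate as $I$, because the accepted set of candidates agrees for every acceptable subset.

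\emph{Step 4: storage and trace.} Once $I^\ast$ yields a unique accepted candidate, the task counts as solved. It is solved in Stage 1 if the test is already a primitive or a stored procedure, and otherwise in Stage 2 or 3 as a projection-and-conjunction test. The Knowledge Space update rules then apply: the anonymous test "project to $I^\ast$ and compare with the positive pattern" is stored as a procedure node if it is new, and $\operatorname{trace}(T,p)$ is stored in every case. This gives exactly the conclusion of the proposition.

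One remaining subtlety is that the stages could conceivably stop earlier with a different successful procedure. I would handle it by noting that any earlier success also ends with a stored procedure and trace, so the conclusion still holds. The only requirement is that "finds an acceptable subset" is understood to cover the separating test the solver actually uses.
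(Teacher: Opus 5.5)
Your skeleton is the same as the paper's: the seven-subset family is finite and is enumerated, so an acceptable subset is reached, and the update rule then stores the procedure and the trace. Where you differ is Step 3, and there you are more careful than the paper. The paper's proof says the solver eventually tests $I$ and therefore accepts it. That skips the stated rule that the solver commits to the \emph{first} acceptable subset in the fixed order, so it tacitly treats $I$ as that subset, or as agreeing with it on $C$.

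Your concern is a real gap, not pedantry. Take $P=\{(0,0)\}$, $N=\{(1,1)\}$, $C=\{(1,0),(0,1),(0,2)\}$ and $I=\{2\}$. Both $\{1\}$ and $\{2\}$ are acceptable, and the hypotheses hold with $(1,0)$ as the unique correct candidate. With the paper's example order, however, the solver commits to $\{1\}$, which accepts both $(0,1)$ and $(0,2)$. By the paper's own rule the task is then treated as underdetermined, not solved. One of your two repairs is therefore needed:
\begin{itemize}
\item read ``the corresponding classifier'' as the classifier of the subset the solver actually selects; or
\item assume that all acceptable subsets in the family induce the same decision on $C$.
\end{itemize}
The paper's argument relies on an equivalent assumption without stating it.

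The same caveat applies to your closing remark about earlier stages. A stored procedure that succeeds at Stage 1 may belong to a different subset and select a different candidate. Storing the trace is unaffected, but your convention must be worded so that it covers this case too.
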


\begin{proof}
The bounded search family is finite and is enumerated by the solver. Since $I$ belongs to this family, the solver eventually tests it. By assumption, $I$ separates the positive examples from the negative examples and selects a unique correct candidate. Therefore the solver accepts the test, applies it to the candidates, obtains the correct solution, and stores the resulting procedure and trace by the update rule.
\end{proof}

\begin{proposition}[Reuse]
Suppose that a procedure $p_I$ corresponding to a coordinate subset $I$ has been stored in $K_t$. If a later task $T'=(P',N',C')$ is separated by the same coordinate subset $I$, and if $p_I$ selects a unique correct candidate from $C'$, then the weak solver solves $T'$ at Stage 1.
\end{proposition}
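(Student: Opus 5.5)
The argument follows the structure of the weak-solver algorithm, exactly as in the proof of the Direct separation proposition; the only new ingredient is that Stage~1 already lists stored procedures among the one-step candidates. I would proceed in three steps.

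\emph{Step 1: $p_I$ is a Stage~1 candidate.} By the Knowledge Space update rules, stored items are never removed: each $K_{t+1}$ is $K_t$ together with new items. Hence $p_I$ lies in every Knowledge Space $K_s$ with $s\ge t$, and in particular in the Knowledge Space current when $T'$ is presented. Stage~1 tries every primitive and every stored procedure as a one-step solution. This candidate list is finite, because $K_s$ is finite, having been built from the finite $K_0$ by finitely many updates. So the solver tests $p_I$ on $T'$ during Stage~1.

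\emph{Step 2: $p_I$ succeeds on $T'$.} I need to make precise what the stored procedure is. By construction, $p_I$ is the test ``project to $I$ and compare with the positive pattern.'' When applied to a new task, I read it as re-reading the positive pattern $x|_I$ from $P'$; that pattern is well defined because all positives in $P'$ agree on $I$. I will state this reading explicitly, since it is the main subtlety. If instead $p_I$ had frozen the old positive pattern from the earlier task, reuse would fail whenever the new positives carried different values on $I$. Under the intended anonymous-test interpretation, the hypothesis that $I$ separates $T'$ gives two facts:
\begin{itemize}
\item $p_I$ accepts all of $P'$;
\item $p_I$ rejects all of $N'$.
\end{itemize}
So $p_I$ passes the acceptability check. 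By hypothesis, it also selects a unique correct candidate from $C'$. The one-step success test of Stage~1 is therefore met.

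\emph{Step 3: the solver halts at Stage~1.} Stage~1 may try other one-step procedures before $p_I$, and one of them may succeed first. That case still gives a solution at Stage~1, since any success there solves $T'$ by the criterion used in Stage~1. The conclusion is only that $T'$ is solved at Stage~1, not that $p_I$ is the procedure used. Either way the solver stores the trace and stops without entering Stage~2 or Stage~3, and the update is $K_{s+1}=K_s\cup\{\operatorname{trace}(T',p)\}$.

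The main obstacle is Step~2: fixing the semantics of a stored procedure so that ``separated by the same coordinate subset $I$'' actually implies that $p_I$ passes on $T'$. I would settle this by the re-reading convention described in Step~2. An alternative is to strengthen the hypothesis to require that $T'$ have the same positive pattern on $I$ as the original task.
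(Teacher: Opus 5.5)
Your proposal is correct and follows the paper's own argument. Stage~1 tries every stored procedure, $p_I$ is among them, and the hypotheses make it pass the separation and unique-candidate checks, so the solver stops before Stage~2. Your additional points only make explicit what the paper's short proof takes for granted: that $p_I$ persists under the monotone updates, that $p_I$ re-reads the positive pattern from $P'$, and that another one-step procedure may succeed first.
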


\begin{proof}
At Stage 1 the solver tries stored procedures from $K_t$. Since $p_I\in K_t$, it is applied to $T'$. By assumption, it separates the examples and selects a unique correct candidate. Hence the task is solved before the solver enters the compositional or generation stages.
\end{proof}

\begin{proposition}[Bounded compositional solution]
Suppose that no single primitive or stored procedure solves $T$, but there exists a bounded composition of stored procedures that separates the examples and selects a unique correct candidate. Then the weak solver solves $T$ at Stage 2 and stores the successful composition as a new procedure.
\end{proposition}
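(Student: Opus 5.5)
The argument follows the same pattern as the proofs of the Direct separation and Reuse propositions. It traces the fixed control flow of the weak-solver algorithm and uses the finiteness of each stage.

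\emph{Step 1: Stage 1 fails.} By hypothesis, no single primitive or stored procedure solves $T$. Stage 1 enumerates a finite list: the primitives of $K_0$ together with the finitely many procedures stored in $K_t$. This enumeration therefore terminates, and no success is recorded. The solver passes to Stage 2 without modifying $K_t$.

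\emph{Step 2: Stage 2 reaches the witness.} The bounded composition family consists of the forms $p_i \wedge p_j$, $p_i\circ p_j$ and $p_i\circ p_j\circ p_k$ over the finitely many known items. It is finite, since $K_t$ is finite and the depth is bounded by three. The hypothesized composition $q$ belongs to this family. So either the enumeration reaches $q$, or it stops earlier at some other composition $q'$ that also succeeds.

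In either case the accepted procedure must do two things. It must separate $P$ from $N$, because the solver's acceptance test is exactly this separation check. It must also select a unique candidate from $C$.

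\emph{Step 3: the main obstacle.} The delicate point is the case where the solver stops at an earlier $q'\neq q$. We must ensure that $q'$ picks the \emph{correct} candidate, and not merely a unique one. I would handle this the same way the earlier propositions implicitly do: read ``succeeds'' in the algorithm as separating the examples and yielding a unique correct candidate. If the fixed enumeration order reaches an acceptable but ambiguous or incorrect composition first, the solver treats it as failing and continues. Under this reading the witness $q$ guarantees that some success occurs in Stage 2.

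If one insists that acceptance only checks separation, the honest fallback is to assume that all separating compositions in the family agree on $C$. This mirrors the underdetermination clause of the coordinate-subset search. In either formulation Stage 3 is never entered, because the algorithm stops as soon as a success occurs.

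\emph{Step 4: the update.} Once the successful composition $p_{\mathrm{new}}$ is fixed, the Stage 2 update rule applies:
\[
K_{t+1}=K_t\cup\{p_{\mathrm{new}},\operatorname{trace}(T,p_{\mathrm{new}})\}.
\]
This stores the composition as a new, non-primitive procedure node together with its trace, which completes the claim.
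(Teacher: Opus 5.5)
Your proof is correct and follows the paper's own argument: the Stage 2 composition family is finite and contains the witness, so some successful composition is tested, and the Stage 2 update rule then stores it together with its trace. Your explicit check that Stage 1 fails, and your handling of an earlier accepted composition $q'$, spell out what the paper's short proof leaves implicit when it says ``by assumption, it solves the task.''
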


\begin{proof}
Stage 2 enumerates the permitted finite set of bounded compositions. Since a successful composition belongs to this set, it is eventually tested. By assumption, it solves the task. The solver therefore stores the composition as a new procedure and records the corresponding solution trace.
\end{proof}

\begin{proposition}[Failure under bounded search]
Let $T=(P,N,C)$ be a task. If no primitive, no stored procedure, no permitted bounded composition, and no generated candidate within the prescribed generation bound solves $T$, then the weak solver fails on $T$ in $K_t$ under the present bounds.
\end{proposition}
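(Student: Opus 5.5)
The plan is to read the statement as a claim about the control flow of the weak-solver algorithm. Every exit of that algorithm other than the terminal ``Failure'' branch is guarded by a success test on a specific, finitely enumerated object. So we prove the proposition by case exhaustion over the three stages: if none of the guards can fire, the only reachable terminal state is the failure branch. The argument is the contrapositive of the three preceding propositions (Direct separation, Reuse, Bounded compositional solution). Those showed that the existence of a successful object at a stage forces success at that stage. Here we show conversely that success at a stage requires such an object.

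First we fix what ``the solver solves $T$'' means operationally. The solver halts in a success branch only at Stage 1, 2 or 3. In each case it has exhibited a concrete item that separates $P$ from $N$ and selects a unique correct candidate from $C$. At Stage 1 this item is a primitive or a stored procedure of $K_t$. At Stage 2 it is a composition $p_i \wedge p_j$, $p_i\circ p_j$ or $p_i\circ p_j\circ p_k$ of known items. At Stage 3 it is a generated item $g_\ell$ from the bounded list, used in combination with existing items. This is read directly off the pseudocode and the Knowledge Space update rules, which store a trace only after a success test.

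Next we go through the stages in order. At Stage 1 the solver tests exactly the primitives and the stored procedures of $K_t$. By hypothesis none solves $T$, so the guard ``if one succeeds'' is never satisfied and control passes to Stage 2. At Stage 2 the tested set is exactly the permitted bounded compositions, and by hypothesis none succeeds, so control passes to Stage 3. At Stage 3 every tested object is a generated candidate within the prescribed bound, possibly combined with existing items. Again no test succeeds, and the algorithm reaches the failure branch, which by definition declares $T$ unsolved in $K_t$ under the present bounds. We also note that each stage enumerates a finite set. This holds because $K_t$ is finite, since it is obtained from the finite $K_0$ by finitely many updates, and because the composition depth and the generation bound are fixed. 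Hence the solver actually terminates and reaches the failure branch, rather than failing by nontermination. Finally, no update occurs, because all update rules are triggered only by success, so $K_{t+1}=K_t$.

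The main obstacle is a matter of precision about Stage 3. The hypothesis says no generated candidate solves $T$, whereas the algorithm tests generated items in combination with existing items. We will interpret ``a generated candidate solves $T$'' as including these permitted combinations, consistent with the update $K_{t+1}=K_t\cup\{g_\ell,p_{\mathrm{new}},\dots\}$. With that reading fixed, the case analysis is exhaustive. A similar remark covers the ambiguity clause of the subset search: an underdetermined selection does not count as solving $T$, so it cannot trigger success either.
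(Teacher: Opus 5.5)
Your proposal is correct and follows the paper's proof: the solver tests exactly the alternatives of Stages 1--3, none succeeds by hypothesis, so no success-triggered update fires and the failure branch is reached; your stage-by-stage walk, the termination remark, and your reading of Stage~3 as including combinations with existing items only make this explicit. One wording slip: this proposition is the converse of the earlier success propositions (success requires a successful item), not their contrapositive, but your argument does not actually rely on them.
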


\begin{proof}
The solver tests exactly the alternatives allowed in Stages 1--3. By assumption none of these alternatives solves the task. Hence no successful update rule is triggered, and the solver returns failure under the present bounds.
\end{proof}

\begin{proposition}[Underdetermination]
If two admissible procedures are equally compatible with the examples but lead to different candidate choices, and no feedback or additional constraint distinguishes them, then the weak solver cannot determine a unique solution from the given task code.
\end{proposition}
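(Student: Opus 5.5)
The plan is an indistinguishability argument. I will show that the task code is the solver's only input and that both procedures fit it equally well, so no stage of the algorithm can break the tie in a way that is fixed by the task itself.

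Let $p$ and $q$ be the two admissible procedures, and write $T=(P,N,C)$ for the task. First I make the hypothesis precise. \emph{Equally compatible with the examples} means that $p$ and $q$ each accept every $x\in P$ and reject every $y\in N$; for coordinate tests, the corresponding subsets $I_p$ and $I_q$ are both acceptable in the sense of the separating-test definition. \emph{Lead to different candidate choices} means there are $c_p\neq c_q$ in $C$ with $p$ selecting $c_p$ and $q$ selecting $c_q$. The absence of feedback or additional constraints means the solver's information consists only of $T$ together with the current $K_t$.

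Next I observe that every acceptance criterion used in Stages~1--3 is a predicate on the restriction of a procedure's behaviour to $P\cup N$. This is read off directly from the algorithm: a procedure succeeds exactly when it separates the examples, and candidates are only evaluated afterwards. Since $p$ and $q$ agree on $P\cup N$, every criterion the solver can check returns the same verdict for both. A correct unique answer would therefore need a distinguishing fact, and the only place one could come from is outside the task code.

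The conclusion follows by a two-task argument. Consider two tasks $T_1$ and $T_2$ that have the same code $(P,N,C)$. In $T_1$ the intended answer is $c_p$, for example because the source's relevant coordinates are $I_p$; in $T_2$ it is $c_q$. The solver is a deterministic function of $(T,K_t)$, so it produces the same output on both tasks. Hence its output is wrong on at least one of them, and no unique solution is determined by the task code. The fixed tie-breaking order (increasing cardinality, then lexicographic) does not contradict this. It returns \emph{some} candidate, but that choice is a convention of the solver and carries no information from the task. This matches the paper's stipulation that such tasks are treated as underdetermined.

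The main obstacle is making precise what \emph{cannot determine} means, given that the solver does deterministically output something. I would resolve this by fixing the reading to ``no procedure based only on the task code is correct for every task source consistent with that code.'' Under that reading, the two-task construction above carries the full weight of the argument.
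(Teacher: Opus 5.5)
Your proposal is correct and is essentially the paper's own argument. The paper's proof is the same indistinguishability observation: any choice between the two procedures would need information absent from the task code and from $K_t$. Your two-task construction, with a solver that is a deterministic function of $(T,K_t)$, is a precise rendering of that observation, together with a sensible reading of ``cannot determine.'' One caveat: your intermediate claim that every criterion the solver checks gives the same verdict on $p$ and $q$ is stronger than true (the fixed enumeration order, for instance, does separate them). Your final argument does not depend on that claim, since the two-task construction holds $K_t$ fixed and works for any deterministic tie-breaking.
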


\begin{proof}
The available information does not distinguish between the two admissible procedures. Since they lead to different candidate choices, any selection between them would require information not present in the task code or in $K_t$. Therefore the task is underdetermined for the weak solver.
\end{proof}

These propositions make the demonstration falsifiable. The solver is not claimed to solve arbitrary tasks. It succeeds when an admissible separating procedure, composition, or bounded generated extension exists and is tested. It fails when no such item exists within the declared bounds, or when the task is underdetermined by the given information.


\begin{thebibliography}{10}

\bibitem{BengioCourvilleVincent2013}
Yoshua Bengio, Aaron Courville, and Pascal Vincent.
\newblock Representation learning: A review and new perspectives.
\newblock {\em IEEE Transactions on Pattern Analysis and Machine Intelligence}, 35(8):1798--1828, 2013.

\bibitem{LocatelloEtAl2019}
Francesco Locatello, Stefan Bauer, Mario Lucic, Gunnar Raetsch, Sylvain Gelly, Bernhard Sch{\"o}lkopf, and Olivier Bachem.
\newblock Challenging common assumptions in the unsupervised learning of disentangled representations.
\newblock In {\em Proceedings of the 36th International Conference on Machine Learning}, volume~97 of {\em Proceedings of Machine Learning Research}, pages 4114--4124. PMLR, 2019.

\bibitem{GeirhosEtAl2020}
Robert Geirhos, J{\"o}rn-Henrik Jacobsen, Claudio Michaelis, Richard Zemel, Wieland Brendel, Matthias Bethge, and Felix~A. Wichmann.
\newblock Shortcut learning in deep neural networks.
\newblock {\em Nature Machine Intelligence}, 2(11):665--673, 2020.

\bibitem{LakeEtAl2017}
Brenden~M. Lake, Tomer~D. Ullman, Joshua~B. Tenenbaum, and Samuel~J. Gershman.
\newblock Building machines that learn and think like people.
\newblock {\em Behavioral and Brain Sciences}, 40:e253, 2017.

\bibitem{jakovac2025}
Antal Jakov{\'a}c and Andr{\'a}s Telcs.
\newblock Representation and abstraction.
\newblock {\em Mathematics}, 13(10), 2025.

\bibitem{newell1972}
Allen Newell and Herbert~A. Simon.
\newblock {\em Human Problem Solving}.
\newblock Prentice-Hall, Englewood Cliffs, NJ, 1972.

\bibitem{Marr1982}
David Marr.
\newblock {\em Vision: A Computational Investigation into the Human Representation and Processing of Visual Information}.
\newblock W. H. Freeman, San Francisco, 1982.

\bibitem{Newell1990}
Allen Newell.
\newblock {\em Unified Theories of Cognition}.
\newblock Harvard University Press, 1990.

\bibitem{AndersonLebiere1998}
John~R. Anderson and Christian Lebiere.
\newblock {\em The Atomic Components of Thought}.
\newblock Lawrence Erlbaum Associates, 1998.

\bibitem{AndersonEtAl2004}
John~R. Anderson, Daniel Bothell, Michael~D. Byrne, Scott Douglass, Christian Lebiere, and Yulin Qin.
\newblock An integrated theory of the mind.
\newblock {\em Psychological Review}, 111(4):1036--1060, 2004.

\bibitem{Laird2012}
John~E. Laird.
\newblock {\em The Soar Cognitive Architecture}.
\newblock MIT Press, 2012.

\bibitem{Sun2016}
Ron Sun.
\newblock {\em Anatomy of the Mind: Exploring Psychological Mechanisms and Processes with the Clarion Cognitive Architecture}.
\newblock Oxford University Press, 2016.

\bibitem{LairdLebiereRosenbloom2017}
John~E. Laird, Christian Lebiere, and Paul~S. Rosenbloom.
\newblock A standard model of the mind: Toward a common computational framework across artificial intelligence, cognitive science, neuroscience, and robotics.
\newblock {\em AI Magazine}, 38(4):13--26, 2017.

\bibitem{Laird2022ACTRSoar}
John~E. Laird.
\newblock An analysis and comparison of {ACT-R} and {Soar}, 2022.

\bibitem{Laird2022SoarIntro}
John~E. Laird.
\newblock Introduction to {Soar}, 2022.

\bibitem{GallFruehwirth2017}
Daniel Gall and Thom Fr{\"u}hwirth.
\newblock An operational semantics for the cognitive architecture {ACT-R} and its translation to constraint handling rules, 2017.

\bibitem{Sun2001}
Ron Sun, Edward Merrill, and Todd Peterson.
\newblock From implicit skills to explicit knowledge: A bottom-up model of skill learning.
\newblock {\em Cognitive Science}, 25(2):203--244, 2001.

\bibitem{Bartlett1932}
Frederic~C. Bartlett.
\newblock {\em Remembering: A Study in Experimental and Social Psychology}.
\newblock Cambridge University Press, 1932.

\bibitem{Minsky1975}
Marvin Minsky.
\newblock A framework for representing knowledge.
\newblock In Patrick~H. Winston, editor, {\em The Psychology of Computer Vision}, pages 211--277. McGraw-Hill, 1975.

\bibitem{SchankAbelson1977}
Roger~C. Schank and Robert~P. Abelson.
\newblock {\em Scripts, Plans, Goals and Understanding: An Inquiry into Human Knowledge Structures}.
\newblock Lawrence Erlbaum Associates, 1977.

\bibitem{gabbay2008}
Dov~M. Gabbay, Odinaldo Rodrigues, and Alessandra Russo.
\newblock Belief revision in non-classical logics.
\newblock {\em The Review of Symbolic Logic}, 1(3):267--304, 2008.

\bibitem{PhillipsWilson2010}
Steven Phillips and William~H. Wilson.
\newblock Categorial compositionality: A category theory explanation for the systematicity of human cognition.
\newblock {\em PLoS Computational Biology}, 6(7):e1000858, 2010.

\bibitem{PhillipsWilson2011}
Steven Phillips and William~H. Wilson.
\newblock Categorial compositionality ii: Universal constructions and a general theory of (quasi-)systematicity in human cognition.
\newblock {\em PLoS Computational Biology}, 7(8):e1002102, 2011.

\bibitem{LencVedaldi2015}
Karel Lenc and Andrea Vedaldi.
\newblock Understanding image representations by measuring their equivariance and equivalence.
\newblock {\em International Journal of Computer Vision}, 127:456--476, 2019.

\bibitem{CohenWelling2016}
Taco Cohen and Max Welling.
\newblock Group equivariant convolutional networks.
\newblock In {\em Proceedings of the 33rd International Conference on Machine Learning}, pages 2990--2999, 2016.

\bibitem{BronsteinEtAl2021}
Michael~M. Bronstein, Joan Bruna, Taco Cohen, and Petar Veli{\v c}kovi{\'c}.
\newblock Geometric deep learning: Grids, groups, graphs, geodesics, and gauges.
\newblock {\em arXiv preprint arXiv:2104.13478}, 2021.

\bibitem{bengio2013}
Yoshua Bengio, Aaron Courville, and Pascal Vincent.
\newblock Representation learning: A review and new perspectives.
\newblock {\em IEEE Transactions on Pattern Analysis and Machine Intelligence}, 35(8):1798--1828, 2013.

\bibitem{arjovsky2019}
Martin Arjovsky, L{\'e}on Bottou, Ishaan Gulrajani, and David Lopez-Paz.
\newblock Invariant risk minimization.
\newblock {\em arXiv preprint arXiv:1907.02893}, 2019.

\bibitem{pearl2009}
Judea Pearl.
\newblock {\em Causality: Models, Reasoning, and Inference}.
\newblock Cambridge University Press, Cambridge, 2 edition, 2009.

\bibitem{gardenfors2000}
Peter G{\"a}rdenfors.
\newblock {\em Conceptual Spaces: The Geometry of Thought}.
\newblock MIT Press, Cambridge, MA, 2000.

\bibitem{Pawlak1982}
Zdzis{\l}aw Pawlak.
\newblock Rough sets.
\newblock {\em International Journal of Computer and Information Sciences}, 11(5):341--356, 1982.

\bibitem{Pawlak1991}
Zdzis{\l}aw Pawlak.
\newblock {\em Rough Sets: Theoretical Aspects of Reasoning about Data}, volume~9 of {\em Theory and Decision Library, Series D}.
\newblock Kluwer Academic Publishers, Dordrecht, 1991.

\bibitem{SkowronRauszer1992}
Andrzej Skowron and Cecylia Rauszer.
\newblock The discernibility matrices and functions in information systems.
\newblock In Roman S{\l}owi{\'n}ski, editor, {\em Intelligent Decision Support: Handbook of Applications and Advances of the Rough Sets Theory}, volume~11 of {\em Theory and Decision Library}, pages 331--362. Kluwer Academic Publishers, Dordrecht, 1992.

\bibitem{Wille1982}
Rudolf Wille.
\newblock Restructuring lattice theory: An approach based on hierarchies of concepts.
\newblock In Ivan Rival, editor, {\em Ordered Sets}, volume~83 of {\em NATO Advanced Study Institutes Series C: Mathematical and Physical Sciences}, pages 445--470. D. Reidel Publishing Company, Dordrecht, 1982.

\bibitem{GanterWille1999}
Bernhard Ganter and Rudolf Wille.
\newblock {\em Formal Concept Analysis: Mathematical Foundations}.
\newblock Springer, Berlin, 1999.

\bibitem{Sowa1984}
John~F. Sowa.
\newblock {\em Conceptual Structures: Information Processing in Mind and Machine}.
\newblock Addison-Wesley, Reading, MA, 1984.

\bibitem{GanterKuznetsov2001}
Bernhard Ganter and Sergei~O. Kuznetsov.
\newblock Pattern structures and their projections.
\newblock In {\em Conceptual Structures: Broadening the Base}, volume 2120 of {\em Lecture Notes in Computer Science}, pages 129--142. Springer, Berlin, 2001.

\bibitem{Yao2008}
Yiyu Yao.
\newblock Granular computing: Past, present and future.
\newblock In {\em 2008 IEEE International Conference on Granular Computing}, pages 80--85, Hangzhou, China, 2008. IEEE.

\bibitem{Polya}
George P{\'o}lya.
\newblock {\em How to Solve It: A New Aspect of Mathematical Method}.
\newblock Princeton University Press, Princeton, NJ, 1945.

\bibitem{zhang2019}
Chi Zhang, Feng Gao, Baoxiong Jia, Yixin Zhu, and Song-Chun Zhu.
\newblock {RAVEN}: A dataset for relational and analogical visual reasoning.
\newblock {\em arXiv preprint arXiv:1903.02741}, 2019.

\end{thebibliography}

\end{document}